\newcommand{\defeq}{\mbox {$  \ \stackrel{\Delta}{=} $}}
\definecolor{indigo}{rgb}{0.0, 0.25, 0.42}
\definecolor{darkorange}{rgb}{1.0, 0.55, 0.0}
\definecolor{darkblue}{rgb}{0.122, 0.435, 0.698}
\let\oldpar\paragraph
\renewcommand{\paragraph}[1]{\vspace{-5mm}\oldpar*{#1}}
\newtheorem{theorem}{Theorem}
\newtheorem{lemma}{Lemma}
\newcommand{\dif}{\mathrm{d}}
\begin{document}

\setlength{\abovedisplayskip}{3pt}
\setlength{\belowdisplayskip}{4pt}

\title{Rate-Constrained Quantization for Communication-Efficient Federated Learning}


\twoauthors{%
    Shayan Mohajer Hamidi\vspace{-3mm}
}{%
    ECE Department, 
    University of Waterloo \\
   \small \texttt{smohajer@uwaterloo.ca}
}{%
   Ali Bereyhi\vspace{-3mm}
}{%
 ECE Department, 
    University of Toronto \\
   \small \texttt{ali.bereyhi@utoronto.ca}
}

\maketitle
\begin{abstract}
Quantization is a common approach to mitigate the communication cost of federated learning (FL). In practice, the quantized local parameters are further encoded via an entropy coding technique, such as Huffman coding, for efficient data compression. In this case, the exact communication overhead is determined by the bit rate of the encoded gradients. Recognizing this fact, this work deviates from the existing approaches in the literature and develops a novel quantized FL framework, called \textbf{r}ate-\textbf{c}onstrained \textbf{fed}erated learning (RC-FED), in which the gradients are quantized subject to both fidelity and data rate constraints. We formulate this scheme, as a joint optimization in which the quantization distortion is minimized while the rate of encoded gradients is kept below a target threshold. This enables for a tunable trade-off between quantization distortion and communication cost. We analyze the convergence behavior of RC-FED, and show its superior performance against baseline quantized FL schemes on several datasets.
\end{abstract}

\begin{keywords}
Federated learning, scalar quantization, source coding, Lloyd-Max algorithm.
\end{keywords}

\section{Introduction}
\label{Sec:Introduction}
Federated learning (FL) enables distributed clients with local datasets to collaboratively train a global model through multiple iterations with a parameter server (PS) \cite{mcmahan2017communication}. FL framework involves three key steps in each iteration \cite{mcmahan2017communication}: ($i$) the PS broadcasts the current global model to all clients. ($ii$) Each device trains the model over its local dataset for some iterations and sends back the locally-updated model, and ($iii$) the PS updated the global model by the aggregation of the local models and starts over with step ($i$). As the expense of enabling distributed training, FL faces some practical challenges, such as communication overhead \cite{9357490,9833972,10487854} and data heterogeneity \cite{10619204,9743558}. The former is particularly crucial, due to unreliable network connections, limited resources, and high latency of many practical data networks, e.g., wireless systems \cite{amiri2020federated,10487854}. 

To improve communication efficiency of FL, various approaches have been proposed in the literature, e.g., gradient quantization \cite{alistarh2017qsgd}, sparsification \cite{stich2018sparsified} and model pruning \cite{jiang2022model}. Among these approaches, \textit{gradient quantization} is a promising method \cite{9305988} in which the gradients are quantized by being represented via a lower number of bits, thereby reducing the communication overhead. This work looks at gradient quantization from an information-theoretic perspective and develops a novel quantization algorithm for gradient compression.

\subsection{Related Work}
Training a model via quantized mini-batch gradients is studied in \cite{alistarh2017qsgd}, where the quantized stochastic gradient descent (QSGD) scheme is proposed. The authors in \cite{amiri2020federated} study the convergence of a lossy FL scheme, where both global and local updates are quantized before transmission. A universal vector quantization scheme for FL is proposed in \cite{9305988} whose error is shown to diminish as the number of clients increases. The authors in \cite{elgabli2020q} study gradient quantization in a decentralized setting. A heterogeneity-aware quantization scheme is proposed in  \cite{chen2021dynamic}, which improves  robustness against heterogeneous quantization errors across the network. Using nonuniform quantization for FL is proposed in \cite{chen2023nqfl}.



In most practical settings, quantized gradients are non-uniformly distributed over their alphabet. Quantization is thus often followed by an entropy source encoder, e.g., Huffman coding, before being transmitted to the PS \cite{shlezinger2020federated}. In such cases, the overall compression rate is described by comparing the local gradients with their quantized counterparts \textit{after entropy encoding}. This suggests that in these settings the compression rate after entropy encoding is a natural objective whose optimization can lead to efficient quantization. Though intuitive, such design has been left unaddressed in the literature. Motivated by that, in this work, we study a quantized FL framework whose objective is to minimize the rate after encoding. The most related study to this work is \cite{chen2024communication} which invokes Lloyd-Max algorithm to develop a new quantization scheme with adaptive levels. The scheme in \cite{chen2024communication} however follows the traditional distortion minimization for quantization. Unlike \cite{chen2024communication}, we design a quantization scheme that constrains the rate after encoding the quantized parameters. 

\subsection{Contributions}
Acknowledging the trade-off between quantization distortion and communication overhead, this paper proposes a novel framework for FL, dubbed \textbf{r}ate-\textbf{c}onstrained \textbf{fed}erated learning (RC-FED). Unlike existing methods that primarily prioritize distortion reduction, RC-FED aims to minimize distortion while simultaneously ensuring that the encoded gradient rate remains below a predefined threshold. This is achieved by incorporating an entropy-aided compression constraint into the traditional distortion minimization objective. By doing so, we select the quantization scheme that offers the optimal balance between distortion and compressibility, as measured by information-theoretic criteria. 
To further reduce the communication overhead, we incorporate a universal quantization technique into RC-FED, which eliminates the need for communicating the quantization hyperparameters over the network. 



The key contributions of this paper are as follows: ($i$) we introduce RC-FED, a communication-efficient FL framework that directly optimizes the end-to-end compression rate of clients. ($ii$) Invoking the results of \cite{zhang2022fundamental,lee2018deep}, we approximate the local gradient distributions with their Gaussian limits whose parameters can be derived empirically. We then use this approximation to develop a \textit{universal} quantization algorithm, in which the clients do not require any hyperparameter exchange during the training phase. ($iii$)
We investigate the convergence behaviour of RC-FED and show that its convergence rate is $\mathcal{O}(\frac{1}{t})$. ($iv$) We validate RC-FED by numerical experiments on the FEMNIST and CIFAR-10 datasets.

\paragraph{Notation} 
Vectors are denoted by bold-face letters, e.g., $\boldsymbol{w}$. The $i$-th entry of $\boldsymbol{w}$ is denoted by $\boldsymbol{w}[i]$, and $\boldsymbol{w}^{\rm T}$ denotes the transpose of $\boldsymbol{w}$. Mathematical expectation is shown by $\mathbb{E} \{ \cdot \}$. For a positive integer $K$, the set $\{1,\dots,K\}$ is shortened as $[K]$. The real axis is represented by $\mathbb{R}$.

\section{Preliminaries} 
We consider a classical FL setting with $K$ clients, where the target learning task can be written as 
\begin{align} \label{eq:FL}
 \min_{\boldsymbol{\theta}} \left[ f(\boldsymbol{\theta}) \triangleq \frac{1}{K} \sum_{k \in [K]} f_k \left(\boldsymbol{\theta}\right) \right].
\end{align} 
Here, $\boldsymbol{\theta} \in \mathbb{R}^d$ represents the parameters of the global model, and $f_k \left( \cdot \right)$ denotes the empirical loss computed by a local mini-batch at client $k$. The function $f\left(\cdot\right)$ further denotes the global loss being defined as the aggregation of local empirical losses.

The classical approach to problem \eqref{eq:FL} is to use distributed stochastic gradient descent (DSGD). Without loss of generality, we consider the basic case with single local iterations: in the $t$-th round of DSGD, the PS shares its latest update of the global model, i.e., $\boldsymbol{\theta}_t$, with all clients. Each client, upon receiving $\boldsymbol{\theta}_t$, computes its local gradient and transmits it to the PS which aggregates them into the global gradient, i.e., the gradient of the aggregated batch, and performs one SGD step as 
\begin{align}
\boldsymbol{\theta}_{t+1} = \boldsymbol{\theta}_t - \eta_t \nabla f(\boldsymbol{\theta}_t)=
\boldsymbol{\theta}_t - \frac{\eta_t}{K} \sum_{k \in [K]} \nabla f_k(\boldsymbol{\theta}_t),    
\end{align}
where $\eta_t$ is the global learning rate. For ease of notation, we denote local gradient $\nabla f_k(\boldsymbol{\theta}_t)$  
by $\boldsymbol{\mathfrak{g}}_{k,t}$ hereafter. We assume that the DSGD algorithm runs for $T^{\max}$ iterations till it converges.

\paragraph{Universal Quantization}
As uploading throughput is typically more limited compared to its downloading counterpart, a common practice is to have the $k$-th user to communicate a finite-bit quantized representation of its gradients. The quantization operation is hence defined to be the procedure of encoding the gradients into a set of bits: for a given number of bits $b$, the quantization operation $\mathsf{Q}(\cdot):\mathbb{R}\mapsto \{s_l: {l \in [2^b]} \}$ maps the entry $i\in [d]$ of gradient $\boldsymbol{\mathfrak{g}}_{k,t}$ into its $b$-bit quantized form, i.e., $\hat{\boldsymbol{\mathfrak{g}}}_{k,t}[i] = \mathsf{Q} (\boldsymbol{\mathfrak{g}}_{k,t}[i])$. We call this quantizer \textit{universal}, as it remains unchanged for all $k\in [K]$ and $t \in [T^{\max}]$.

\paragraph{Source-encoded Transmission}
The quantized gradients are source-encoded for further compression. We assume that an \textit{entropy coding}, e.g., Huffman or Lempel-Ziv scheme, is used for compression. By \textit{entropy coding}, we refer to source coding schemes whose compression rates in the large limit converge to Shannon's bound, i.e., entropy of the quantized source.


\section{Rate-Constrained FL} \label{sec:meth}
%
The RC-FED framework consists of 
four core components: ($i$) gradient normalization, ($ii$) gradient quantization, ($iii$) gradient transmission scheme, and ($iv$) gradient accumulation at the PS. These components are illustrated below.

\subsection{Gradient Normalization} \label{sec:normal}
If each client uses a \textit{personalized} quantizer, i.e., quantizer whose hyperparameters are tuned for the client, it needs to share its quantization hyperparameters with the PS. This leads to further communication overhead, which is undesired. We hence develop a \textit{universal} scheme in which the quantization hyperparameters are the same 
across all clients. In this case, as long as the bit rate remains unchanged, the quantizer hyperparameters need to be computed once at the beginning of the training phase by the PS. 

We enable universal quantization through a statistics-aware gradient normalization: 
as shown in \cite{zhang2022fundamental,lee2018deep}, the distribution of local gradients tend to Gaussian over the course of training.\footnote{This is shown under a set of limiting properties for the model that we assume holding approximately.} Assuming large learning model, we use this result to approximate local gradient of client $ k \in [K]$ at round $t$ as samples of Gaussian distribution $\mathcal{N} (\mu_{k,t}, \sigma^2_{k,t})$ with mean $\mu_{k,t}$ and standard deviation $\sigma_{k,t}$ computed empirically from $\boldsymbol{\mathfrak{g}}_{k,t}$.

The Gaussian approximation of model gradients enables clients to independently process their local gradients into normalized versions with the same statistics: at round $t$, client $k$ normalizes 
its local gradient as
$\Tilde{\boldsymbol{\mathfrak{g}}}_{k,t} = (\boldsymbol{\mathfrak{g}}_{k,t} - \mu_{k,t} )/{\sigma_{k,t}}$. These normalized versions can be approximated by $\mathcal{N}(0,1)$ for all $k\in [K]$. This enables the design of a universal quantization scheme that is discussed next. %

\subsection{Gradient Quantization with Constrained Rate} \label{sec:quant}

We now treat the normalized gradient entries as random variables. Since they are distributed identically, we use the notation $Z$ to refer to a sample normalized entry. As mentioned, the distribution of $Z$ is approximated by $\mathcal{N}(0,1)$; however, for sake of generality, we consider a general distribution with probability density function (PDF) $f_Z(\cdot)$. The normalized gradients are quantized by $\mathsf{Q}(\cdot)$ with $b$ bits. In the sequel, we show the quantization levels with $s_l$ and the boundaries with $u_l$ for ${l \in [2^b]}$, i.e., $\mathsf{Q}(z) = s_l$, if $ u_{l} < z \leq u_{l+1}$. Our goal is to find efficient choice of levels and boundaries.

Traditional designs try to minimize a distortion measure between $z$ and $\mathsf{Q}(z)$, i.e., the quantization distortion. A classical measure is the mean squared error (MSE) computed as 
\begin{align}
\mathsf{MSE}_{\mathsf{Q}} (Z) = \sum_{l \in [2^b]} \int_{u_l}^{u_{l+1}} (z-s_l)^2 f_Z({z}) \dif {z},
\end{align}
which we adopt in this work. %
Though minimizing MSE optimizes fidelity, it does not guarantee that the quantized version is efficiently compressed. We hence deviate from the traditional approach of minimizing MSE and further constrain the compression rate of the quantized gradients: let $\ell_l$ denote the number of bits encoding $s_l$. As we use an entropy encoding, $\ell_l$ only depends on the probability of $s_l$ and not $s_l$ itself. 
The average codeword length after encoding $\mathsf{Q}(z)$ is then given by 
\begin{align}
\mathsf{R}_{\mathsf{Q}} (Z) = \sum_{l \in [2^b]} \ell_l \int_{u_l}^{u_{l+1}} f_Z({z}) \dif {z}.
\end{align}
The rate-constrained quantization is then formulated as
\begin{align} \label{eq:RD}
&\min_{\{s_l,u_l: {l \in [2^b]} \} }  \mathsf{MSE}_{\mathsf{Q}} (Z) \;\;
\text{subject to} \;\; \mathsf{R}_{\mathsf{Q}} (Z) \leq R^{\rm trg},
\end{align}
for some pre-defined threshold $R^{\rm trg} > 0 $. %
Alternatively, one can use the Lagrange multipliers technique to write the design in \eqref{eq:RD} in form of a \textit{regularized} distortion optimization as
\begin{align} \label{eq:uncons1}
\min_{\{s_l,u_l: {l \in [2^b]} \} }  \mathsf{MSE}_{\mathsf{Q}} (Z) + \lambda \mathsf{R}_{\mathsf{Q}} (Z),
\end{align}
for a regularizer $\lambda > 0$ that controls distortion-rate trade-off. Using the definitions of MSE and rate, we can simplify \eqref{eq:uncons1} as
\begin{align} \label{eq:uncons}
\min_{\{s_l,u_l: {l \in [2^b]} \} }  \sum_{l \in [2^b]} \int_{u_l}^{u_{l+1}}  \left[ (z-s_l)^2+ \lambda\ell_l \right] f_Z({z}) \dif {z}.
\end{align}

\paragraph{Iterative Optimization}
The optimization in \eqref{eq:uncons} does not have a closed-form solution. We thus invoke the alternating optimization technique to approximate the optimal levels and boundaries iteratively: we \textit{marginally} optimize the levels and boundaries while treating the other as fixed. We then iterate between these two marginal problems till convergence:
\begin{enumerate}
    \item To optimize $s_l$, we note that $\mathsf{R}_{\mathsf{Q}} (Z)$ does not depend on levels $s_l$ and rather $\ell_l$. Thus, in marginal optimization against $s_l$, \eqref{eq:uncons} reduces to classic Lloyd quantizer, which finds the optimal $s_l$ for $l \in [2^b]$ as \cite{lloyd1982least}
\begin{align} \label{eq:s_l}
s_l = \left[ \int_{u_l}^{u_{l+1}} f_Z({z}) \dif {z} \right]^{-1} {\int_{u_l}^{u_{l+1}} {z} f_Z({z}) \dif {z}}.   
\end{align}
\item 
To marginally optimize $u_l$, we note that the objective in \eqref{eq:uncons} can be seen as the expectation of a piece-wise function of $Z$. With optimal boundaries, this function should be \textit{continuous}, i.e., at $z = u_l$ the integrand determined by the piece-function on $(u_{l-1}, u_l]$ should be the same as the one given on $(u_{l}, u_{l+1}]$. This means that 
\begin{align}
(u_l - s_{l-1})^2 + \lambda \ell_{l-1} = (u_l - s_{l})^2 + \lambda \ell_{l},
\end{align}
which after simplification concludes 
\begin{align}
u_l = \frac{s_{l} + s_{l-1}}{2} + \frac{\lambda}{2} \left(\frac{\ell_{l} - \ell_{l-1}}{s_{l} - s_{l-1}}\right).\label{eq:u_l}
\end{align}
\end{enumerate} 
The optimal levels and boundaries are computed by iterating between \eqref{eq:s_l} and \eqref{eq:u_l} until a convergence criterion is met. We denote this quantizer by $\mathsf{Q}^{\star}(\cdot)$. Client $k$ then quantizes its normalized gradient as $\hat{\boldsymbol{\mathfrak{g}}}_{k,t}[i] = \mathsf{Q}^{\star}(\Tilde{\boldsymbol{\mathfrak{g}}}_{k,t}[i])$ for $i \in [d]$. 


\paragraph{Rate-constrained vs Unconstrained}
Comparing with traditional unconstrained quantization \cite{chen2024communication}, the proposed rate-constrained scheme shifts the boundaries to guarantee restricted bit rate: without rate constraint, the boundaries are computed from Lloyd solution as \(u_l = (s_{l} + s_{l-1})/2\). With rate constraint, however, \(u_l\) is shifted towards the reconstruction level associated with the longer codeword; see \eqref{eq:u_l}. Thus, intervals associated with longer codewords become smaller, and longer codewords are chosen with less frequently.

\subsection{Gradient Transmission} \label{sec:trans}
Client $k$ transmits $\hat{\boldsymbol{\mathfrak{g}}}_{k,t}$ after encoding it by the entropy coding scheme, e.g., Huffman coding, whose decoder is known to the PS. 
We denote the encoder by $\mathsf{enc}(\cdot)$: at iteration $t$, client $k$ sends $\bold{m}_{k,t} =\mathsf{enc}(\hat{\boldsymbol{\mathfrak{g}}}_{k,t}[i])$ for $i \in [d]$ to the PS. In addition, to keep the PS capable of reconstructing the (non-normalized) local gradients, client $k$ transmits mean $\mu_{k,t}$ and standard deviation $\sigma_{k,t}$. For this transmission, it uses full-precision, i.e., 32 bits, requiring a total of 64 extra bit transmissions. 


\subsection{Gradient Accumulation} \label{sec:PS}
Upon receiving $\bold{m}_{k,t}$, the PS uses decoder $\mathsf{dec}(\cdot)$ to retrieve $\hat{\boldsymbol{\mathfrak{g}}}_{k,t} = \mathsf{dec}(\bold{m}_{k,t})$. It then uses the inverse function of $\mathsf{Q}^{\star}(\cdot)$, denoted by $\mathsf{Q}^{\star}_{\rm i}(\cdot)$, along with $\mu_{k,t}$ and $\sigma_{k,t}$ to reconstruct 
\begin{align} \label{eq:reconst}
\breve{\boldsymbol{\mathfrak{g}}}_{k,t}[i]  =  \sigma_{k,t} \mathsf{Q}^{\star}_{\rm i} (\hat{\boldsymbol{\mathfrak{g}}}_{k,t}[i])  + \mu_{k,t}.
\end{align}
Finally, it computes
$\bar{\boldsymbol{\mathfrak{g}}}_{t}$ by averaging 
$\breve{\boldsymbol{\mathfrak{g}}}_{k,t}$ over the clients and updates the global model as
$\boldsymbol{\theta}_{t+1}=\boldsymbol{\theta}_{t}-\eta_t \bar{\boldsymbol{\mathfrak{g}}}_{t}$.    
The proposed scheme is summarized in Algorithm \ref{alg}.

\begin{algorithm}[t]
\caption{RC-FED} \label{alg}
\small
{\bfseries Input:} Number of rounds ${T}^{\max}$, learning rate ${\eta}_t$, initial global model $\boldsymbol{\theta}_{0}$, local datasets, and quantizer $\mathsf{Q}^{\star}(\cdot)$\\ 
\For{$t=0,1,\dots,\text{T}^{\rm max}-1$}
{
    PS sends $\boldsymbol{\theta}_{t}$ to all the clients\\
    \For{client $k \in [K]$ in parallel}{
        Compute $ \boldsymbol{\mathfrak{g}}_{k,t}$ 
        over a randomly chosen mini-batche\\
        Compute $\mu_{k,t}$ and $\sigma_{k,t}$ and normalize $ \boldsymbol{\mathfrak{g}}_{k,t}$ to $\Tilde{\boldsymbol{\mathfrak{g}}}_{k,t}$ \\
        Quantize $\hat{\boldsymbol{\mathfrak{g}}}_{k,t}[i] = \mathsf{Q}^{\star}(\Tilde{\boldsymbol{\mathfrak{g}}}_{k,t}[i])$ for $i \in [d]$\\
        Encode quantized gradient as $\bold{m}_{k,t} = \mathsf{enc}(\hat{\boldsymbol{\mathfrak{g}}}_{k,t})$ \\
        Send $\bold{m}_{k,t}$ and $(\mu_{k,t},\sigma_{k,t})$ to the PS
    }

    PS computes $\breve{\boldsymbol{\mathfrak{g}}}_{k,t}$ as per \eqref{eq:reconst}, and finds $\bar{\boldsymbol{\mathfrak{g}}}_{t}$\\
    PS updates the global model as $\boldsymbol{\theta}_{t+1}=\boldsymbol{\theta}_{t}-\eta_t \bar{\boldsymbol{\mathfrak{g}}}_{t}$
}
\textbf{Output:} Global model $\boldsymbol{\theta}_{{T}^{\max}}$ 
\end{algorithm}

\begin{figure*}[!t] 
\vskip -0.2in
  \centering 
  \subfloat[CIFAR-10.]{\includegraphics[width=0.45\linewidth]{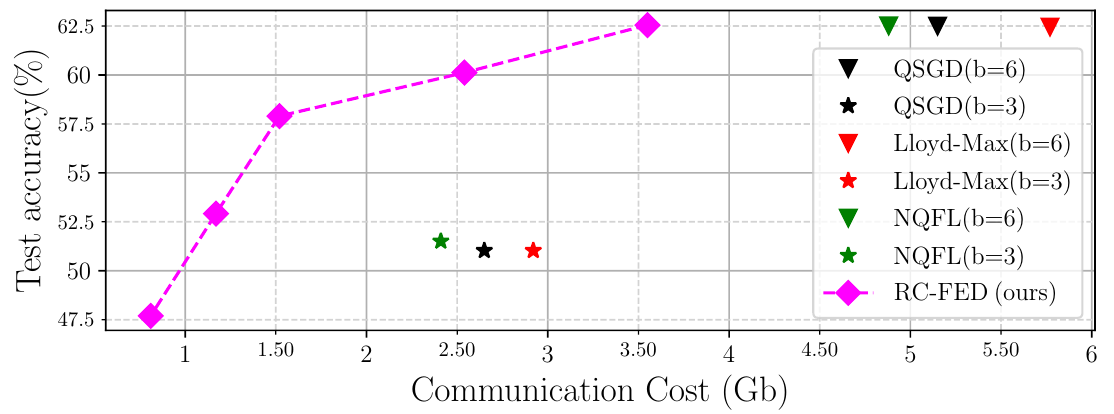}
\label{fig:cifar10}}
  \subfloat[FEMNIST.]{\includegraphics[width=0.45\linewidth]{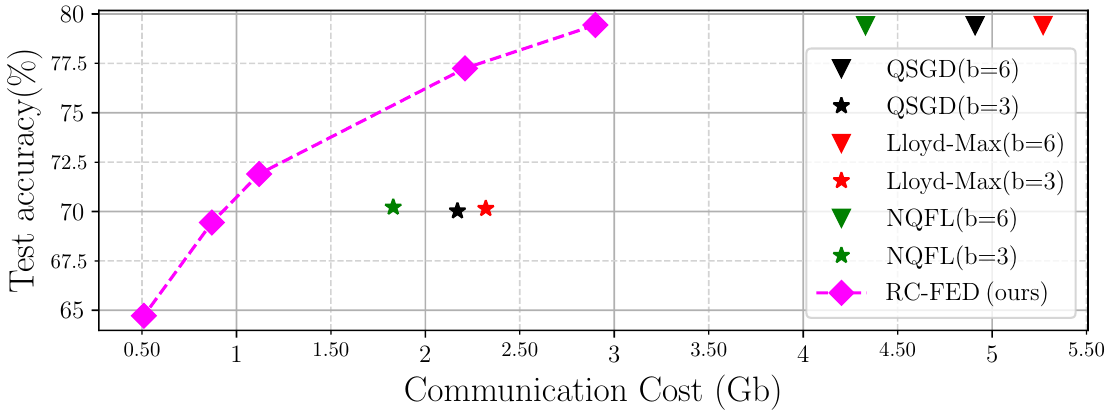}
\label{fig:fmnist}}
  \caption{Test accuracy vs communication costs in Gb for RC-FED and baselines over (a) CIFAR-10, and (b) FEMNIST.}  \vskip -0.1in \label{fig:ImageNet}

\end{figure*}

\section{Convergence Analysis}
We nest study the convergence properties of RC-FED. We start the analysis by stating the set of analytic assumptions on local gradients and losses \cite{li2019convergence,9305988}: we assume that
\begin{itemize}
    \item[(A-I)] For all $\boldsymbol{\theta} \in \mathbb{R}^d$, the expected squared Euclidean distance of local gradient $\boldsymbol{\mathfrak{g}}_{k,t}$ is bounded by some $\zeta^2_k$.
    \item[(A-II)] For ${k \in [K]}$, the second moment of the local gradient 
is bounded by $\xi^2$ for all $\boldsymbol{\theta} \in \mathbb{R}^d$.
\item[(A-III)] For ${k \in [K]}$, the local loss function $f_k(\cdot)$ is $L$-smooth, i.e., for any $\boldsymbol{v}_1, \boldsymbol{v}_2 \in \mathbb{R}^d$, we have $f_k(\boldsymbol{v}_1) - f_k(\boldsymbol{v}_2) \leq (\boldsymbol{v}_1 - \boldsymbol{v}_2)^{\sf T} \nabla f_k(\boldsymbol{v}_2) + \frac{L}{2} \| \boldsymbol{v}_1 - \boldsymbol{v}_2 \|^2$. 
\item[(A-IV)] For ${k \in [K]}$, the local loss function $f_k(\cdot)$ is $\rho$-strongly
convex, i.e., for any $\boldsymbol{v}_1, \boldsymbol{v}_2 \in \mathbb{R}^d$, we have $f_k(\boldsymbol{v}_1) - f_k(\boldsymbol{v}_2) \geq (\boldsymbol{v}_1 - \boldsymbol{v}_2)^{\sf T} \nabla f_k(\boldsymbol{v}_2) + \frac{\rho}{2} \| \boldsymbol{v}_1 - \boldsymbol{v}_2 \|^2$.
\end{itemize}

Under these assumptions, we characterize the optimality gap of RC-FED. To this end, let  $\boldsymbol{\theta}^{\star}$ denote the solution of minimization \eqref{eq:FL}, and define the \textit{heterogeneity gap} as \cite{9305988}
\[
\Gamma \defeq f(\boldsymbol{\theta}^{\star}) - \frac{1}{K} \sum_{k \in [K]} \min_{\boldsymbol{\theta}} f_k(\boldsymbol{\theta}).
\]
The following theorem establishes an upper bound on the optimality gap at each iteration of RC-FED with an arbitrary number of local iterations at each client. %
\begin{theorem} \label{th:conv}
Let assumptions (A-I) to (A-IV) hold. Assume that all clients perform $e$ local iterations, and that $\eta_t = \frac{2}{\rho (t+\gamma)}$ for $\gamma = \max \{ 8{L}/{\rho} , e \}-1$. Define the optimality gap at round $t$ as 
$\Delta_t= \mathbb{E} \{ f(\boldsymbol{\theta}_{t}) - f(\boldsymbol{\theta}^{\star})\}$.
Then, starting with $\boldsymbol{\theta}_0$, we have
\begin{align}
\Delta_t \leq \frac{L}{2(t+\gamma)} \max  \Big\{ \frac{4C}{\rho^2} , (\gamma+1) \mathbb{E} \big\{ \| \boldsymbol{\theta}_{0} - \boldsymbol{\theta}^{\star}\|^2 \big\}\Big\},
\end{align}
where $C$ is given by
\[
C =  \frac{\pi e}{6K} \sum_{k \in [K]} \sigma^2_{k,t} 2^{-2\mathsf{R}_{\sf Q^{\star}} (Z)} +6L\Gamma +\frac{8(e-1)}{K} \sum_{k \in [K]} \zeta^2_k.
\]
\end{theorem}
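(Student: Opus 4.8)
The plan is to cast RC-FED as distributed SGD on a $\rho$-strongly convex, $L$-smooth objective corrupted by a zero-mean quantization perturbation, and then run the standard diminishing-step-size argument of \cite{li2019convergence,9305988}; the new ingredient is that the perturbation variance is controlled through the \emph{rate} of the encoded quantizer rather than through a fixed bit budget.

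\emph{Step 1 (quantization error via the rate).} Since the levels of $\mathsf{Q}^{\star}$ satisfy the centroid condition \eqref{eq:s_l}, the quantizer is conditionally unbiased on each cell, so $\mathbb{E}\{\mathsf{Q}^{\star}(Z)-Z\}=0$ and $\mathbb{E}\{(\mathsf{Q}^{\star}(Z)-Z)^2\}=\mathsf{MSE}_{\mathsf{Q}^{\star}}(Z)$. Using the Gaussian limit $Z\sim\mathcal{N}(0,1)$ of the normalized gradient entries \cite{zhang2022fundamental,lee2018deep} together with the high-resolution bound for entropy-constrained scalar quantization, $\mathsf{MSE}_{\mathsf{Q}^{\star}}(Z)\le\tfrac{1}{12}2^{2h(Z)}2^{-2\mathsf{R}_{\mathsf{Q}^{\star}}(Z)}$ with $h(Z)=\tfrac12\log_2(2\pi e)$, one gets $\mathsf{MSE}_{\mathsf{Q}^{\star}}(Z)\le\tfrac{\pi e}{6}2^{-2\mathsf{R}_{\mathsf{Q}^{\star}}(Z)}$. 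Undoing the normalization through \eqref{eq:reconst} scales the per-entry error by $\sigma^2_{k,t}$, so that
\[
\mathbb{E}\{\|\breve{\boldsymbol{\mathfrak{g}}}_{k,t}-\boldsymbol{\mathfrak{g}}_{k,t}\|^2\}\le \sigma^2_{k,t}\,\tfrac{\pi e}{6}\,2^{-2\mathsf{R}_{\mathsf{Q}^{\star}}(Z)},
\]
and averaging over the $K$ clients produces the first term of $C$.

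\emph{Step 2 (one-step recursion).} Write $\bar{\boldsymbol{\mathfrak{g}}}_t=\tfrac1K\sum_{k}\breve{\boldsymbol{\mathfrak{g}}}_{k,t}$ as the full-precision $e$-local-step aggregate plus the quantization noise of Step 1, and expand $\mathbb{E}\{\|\boldsymbol{\theta}_{t+1}-\boldsymbol{\theta}^{\star}\|^2\}=\mathbb{E}\{\|\boldsymbol{\theta}_t-\eta_t\bar{\boldsymbol{\mathfrak{g}}}_t-\boldsymbol{\theta}^{\star}\|^2\}$. Bounding the cross term with $\rho$-strong convexity (A-IV), the squared-gradient term with $L$-smoothness (A-III) and the moment bounds (A-I)–(A-II), and accounting for the client drift over $e$ local steps (contributing $\tfrac{8(e-1)}{K}\sum_k\zeta_k^2$) and for data heterogeneity (contributing $6L\Gamma$), exactly as in \cite{li2019convergence,9305988}, and then adding the quantization-noise variance from Step 1, yields for $\eta_t\le\tfrac{1}{4L}$ (ensured by $\gamma\ge 8L/\rho-1$)
\[
\mathbb{E}\{\|\boldsymbol{\theta}_{t+1}-\boldsymbol{\theta}^{\star}\|^2\}\le(1-\rho\eta_t)\,\mathbb{E}\{\|\boldsymbol{\theta}_t-\boldsymbol{\theta}^{\star}\|^2\}+\eta_t^2\,C,
\]
with $C$ the quantity in the statement. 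With $\eta_t=\tfrac{2}{\rho(t+\gamma)}$, set $v=\max\{4C/\rho^2,\ (\gamma+1)\mathbb{E}\{\|\boldsymbol{\theta}_0-\boldsymbol{\theta}^{\star}\|^2\}\}$ and prove by induction that $\mathbb{E}\{\|\boldsymbol{\theta}_t-\boldsymbol{\theta}^{\star}\|^2\}\le\tfrac{v}{t+\gamma}$: the base case is the definition of $v$, and the inductive step reduces to checking $(1-\rho\eta_t)\tfrac{v}{t+\gamma}+\eta_t^2C\le\tfrac{v}{t+\gamma+1}$, which holds because $v\ge 4C/\rho^2$ and $\rho\eta_t\le 1$. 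Finally, $L$-smoothness at the optimum (where $\nabla f(\boldsymbol{\theta}^{\star})=0$) gives $\Delta_t\le\tfrac{L}{2}\mathbb{E}\{\|\boldsymbol{\theta}_t-\boldsymbol{\theta}^{\star}\|^2\}\le\tfrac{Lv}{2(t+\gamma)}$, which is the claim.

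The main obstacle is Step 1: rigorously justifying the distortion–rate inequality $\mathsf{MSE}_{\mathsf{Q}^{\star}}(Z)\le\tfrac{\pi e}{6}2^{-2\mathsf{R}_{\mathsf{Q}^{\star}}(Z)}$ for the \emph{rate-constrained} quantizer $\mathsf{Q}^{\star}$ (whose boundaries \eqref{eq:u_l} are not the Lloyd midpoints) and then threading it, with the correct $\sigma^2_{k,t}$ scaling and per-coordinate bookkeeping, into the stochastic-gradient variance so that it augments rather than disturbs the standard FedAvg recursion. The $e$-local-step accounting that delivers the $6L\Gamma$ and $\tfrac{8(e-1)}{K}\sum_k\zeta_k^2$ contributions is routine but must be tracked carefully to reproduce $C$ exactly.
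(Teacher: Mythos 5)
Your proposal follows essentially the same route as the paper: the per-entry quantization error is bounded via the high-resolution entropy/differential-entropy argument for the Gaussian-normalized gradients (giving the $\frac{\pi e}{6}\sigma_{k,t}^2 2^{-2\mathsf{R}_{\mathsf{Q}^{\star}}(Z)}$ term, the paper's Lemma 2), this is plugged together with the client-drift and heterogeneity terms into the one-step recursion of Lemma 1 of \cite{li2019convergence}, and the theorem follows by induction on $\mathbb{E}\{\|\boldsymbol{\theta}_t-\boldsymbol{\theta}^{\star}\|^2\}\le v/(t+\gamma)$ plus $L$-smoothness. The caveat you flag about rigorously justifying the distortion–rate inequality for the rate-constrained quantizer is shared by the paper itself, which also relies on the high-rate approximation at that step.
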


 \begin{proof}
 Please refer to Appendix \ref{sec:proof}.
 \end{proof}

Theorem \ref{th:conv} implies that the convergence rate of RC-FED is $\mathcal{O}({1}/{t})$, which is align with that of \cite{9305988}. This means that the rate constraint we imposed in RC-FED does not slow down the convergence of FL algorithm. 

\vskip -0.1in

\section{Numerical Results}
We evaluate RC-FED through some numerical experiments and compare it against the state-of-the-art. As baseline 
%
we consider {traditional QSGD} \cite{alistarh2017qsgd}, {Lloyd-Max quantizer} \cite{chen2024communication}, and NQFL \cite{chen2023nqfl}. We test the benchmark schemes for $b=\{3,6\}$. For a fair comparison, we use Huffman coding to compress the quantized gradients in all methods.
We compute the test accuracy of the trained models and sketch them against the communication costs required for the entire training.

We first consider CIFAR-10 with similar setups to those in \cite{hamidi2024adafed,10381881}.
We distribute the dataset among $K=10$ clients using Dirichlet allocation with $\beta=0.5$ and train \textit{ResNet-18} for 100 communication rounds with single local iterations. The batch size is set to 64 and $\eta_t=0.01$.

We next consider the federated extended MNIST (FEMNIST) dataset, which is a federated image classification dataset distributed over 3550 devices with 62 classes \cite{caldas2018leaf}. We train a classic {CNN} with two convolutional layers followed by two fully-connected layers. The batch size is 32. At each round, $K=500$ devices are randomly sampled out of the 3550 ones and locally train for $e=2$ local iterations using the default data stored in each device. 

\paragraph{Performance Comparison} The results for CIFAR-10 and FEMNIST datasets are depicted in Figs. \ref{fig:cifar10} and \ref{fig:fmnist}, respectively. 
For RC-FED, we plot the results for various $0.02 \leq \lambda \leq 0.1$ resulting in a \textit{curve}. As observed, for the same test accuracy, RC-FED requires significantly lower communication costs. For instance, considering CIFAR-10 dataset, RC-FED achieves $62.52\%$ and $52.92\%$ test accuracy with 3.55 Gigabits (Gb) and 1.17 Gb of data transmission, which is significantly lower than that required by the benchmarks. This empirically validate the effectiveness of quantization with rate constraint.


\vskip -0.2in

\section{Conclusion}
This paper introduced RC-FED, a quantized FL framework in which clients quantize their local gradients by simultaneously minimizing distortion and restricting the data rate achieved after encoding the quantized gradients. Numerical results showed that RC-FED ($i$) significantly reduces the communication overhead as compared with baselines, while maintaining the test accuracy, and ($ii$) establishes a tunable trade-off between communication overhead and test accuracy. A natural direction for future work is to extend the RC-FED framework beyond scalar quantization. 


\bibliography{main}

\bibliographystyle{IEEEtran}

\clearpage
\newpage

\section{Proof of Theorem 1} \label{sec:proof}
If \textbf{A(III)} and \textbf{A(IV)} hold, and $\eta_t \leq \frac{1}{4L}$, then based on \cite{li2019convergence} (see Lemma 1 therein), we have 
\small
\begin{align}
& \mathbb{E} \{ \| \boldsymbol{\theta}_{t+1} - \boldsymbol{\theta}^{\star} \|^2 \}  \leq (1 - \eta_t \rho) \mathbb{E} \{ \| \boldsymbol{\theta}_{t} - \boldsymbol{\theta}^{\star} \|^2 \}  + 6L \eta_t^2 \Gamma \nonumber \\ \label{eq:bound}
& \quad + 2  \mathbb{E} \Big\{ \frac{1}{K} \sum_{k \in [K]} \| \boldsymbol{\theta}^t - \boldsymbol{\theta}_{k,t}\|^2\Big\} + \eta_t^2 \mathbb{E} \Big\{ \| \bar{\boldsymbol{\mathfrak{g}}}_{t} - \frac{1}{K} \sum_{k \in [K]} \boldsymbol{\mathfrak{g}}_{k,t} \|^2 \Big\} ,
\end{align}
\normalsize
where $\boldsymbol{\theta}_{k,t} = \boldsymbol{\theta}_{k,t-1} - \eta_{t-1}\mathfrak{g}_{k,t}$ is the updated model parameter at client $k$. To bound $\mathbb{E} \Big\{ \frac{1}{K} \sum_{k \in [K]} \| \boldsymbol{\theta}^t - \boldsymbol{\theta}^t_k\|^2\Big\}$ and $\mathbb{E} \Big\{ \| \bar{\boldsymbol{\mathfrak{g}}}_{t} - \frac{1}{K} \sum_{k \in [K]} \boldsymbol{\mathfrak{g}}_{k,t} \|^2 \Big\}$ in \eqref{eq:bound}, we introduce Lemmas \ref{lem:1} and \ref{lem:2} whose proofs are provided in \textit{supporting documents}:
\begin{lemma} \label{lem:1}
Assume that assumption \textbf{A(I)} holds, and that local clients perform $e$ local epochs of training, then
\begin{align}
\mathbb{E} \Big\{ \frac{1}{K} \sum_{k \in [K]} \| \boldsymbol{\theta}^t - \boldsymbol{\theta}^t_k\|^2\Big\} \leq 4\eta_t^2 (e-1)^2 \frac{1}{K} \sum_{k \in [K]}  \zeta^2_k.    
\end{align}    
\end{lemma}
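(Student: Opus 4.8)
The plan is to bound the client drift term $\mathbb{E}\big\{\frac{1}{K}\sum_{k\in[K]}\|\boldsymbol{\theta}^t-\boldsymbol{\theta}^t_k\|^2\big\}$ by tracking how far a client's local iterate wanders from the last synchronized global model over the course of $e$ local SGD steps. Let $t_0$ denote the most recent synchronization round before $t$, so that $\boldsymbol{\theta}_{k,t_0}=\boldsymbol{\theta}_{t_0}=\boldsymbol{\theta}^t$ for all $k$ (in the notation of the excerpt, the global iterate at the start of the local round), and $t-t_0\le e-1$. First I would write the telescoping identity $\boldsymbol{\theta}^t_k-\boldsymbol{\theta}^t=-\sum_{j=t_0}^{t-1}\eta_j\,\boldsymbol{\mathfrak{g}}_{k,j}$, expressing the drift as a sum of at most $e-1$ gradient steps.

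Next I would apply the Cauchy–Schwarz (or Jensen) inequality to the finite sum: $\big\|\sum_{j=t_0}^{t-1}\eta_j\boldsymbol{\mathfrak{g}}_{k,j}\big\|^2\le (t-t_0)\sum_{j=t_0}^{t-1}\eta_j^2\|\boldsymbol{\mathfrak{g}}_{k,j}\|^2\le (e-1)\sum_{j=t_0}^{t-1}\eta_j^2\|\boldsymbol{\mathfrak{g}}_{k,j}\|^2$. Taking expectations and invoking assumption (A-I), which bounds $\mathbb{E}\{\|\boldsymbol{\mathfrak{g}}_{k,j}\|^2\}\le\zeta_k^2$, gives $\mathbb{E}\{\|\boldsymbol{\theta}^t_k-\boldsymbol{\theta}^t\|^2\}\le (e-1)\sum_{j=t_0}^{t-1}\eta_j^2\,\zeta_k^2$. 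Then I would use the standard near-constancy of the step size within a local round — since $\eta_t=\tfrac{2}{\rho(t+\gamma)}$ is non-increasing and $\gamma\ge e-1$, one has $\eta_j\le 2\eta_t$ for all $j$ in the window $[t_0,t-1]$ — so that $\sum_{j=t_0}^{t-1}\eta_j^2\le (e-1)(2\eta_t)^2=4(e-1)\eta_t^2$. Combining yields $\mathbb{E}\{\|\boldsymbol{\theta}^t_k-\boldsymbol{\theta}^t\|^2\}\le 4\eta_t^2(e-1)^2\zeta_k^2$; averaging over $k\in[K]$ gives exactly the claimed bound.

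The main obstacle is the step-size control argument: one must justify that $\eta_j$ does not vary too much across the local window, which relies on the specific choice $\gamma=\max\{8L/\rho,e\}-1\ge e-1$ ensuring $(j+\gamma)\le 2(t+\gamma)$ whenever $t-j\le e-1$. This is where the factor of $4$ (hence the squared $(e-1)$) enters, and getting the constant to match the statement requires care. A secondary subtlety is bookkeeping the indexing convention $\boldsymbol{\theta}_{k,t}=\boldsymbol{\theta}_{k,t-1}-\eta_{t-1}\boldsymbol{\mathfrak{g}}_{k,t}$ used in the excerpt so that the sum limits and the count of at most $e-1$ terms are correct; everything else is a routine application of Jensen's inequality and assumption (A-I).
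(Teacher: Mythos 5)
Your argument is correct and follows essentially the same route as the paper's proof: introduce the last synchronization point $t_0$ with $t-t_0\le e-1$, telescope the local updates, apply Cauchy--Schwarz/Jensen over the at most $e-1$ gradient steps, invoke (A-I), and use $\eta_{t_0}\le 2\eta_t$ from the step-size schedule with $\gamma\ge e-1$. The only cosmetic differences are that the paper does not identify $\boldsymbol{\theta}^t$ with $\boldsymbol{\theta}_{t_0}$ but instead writes $\boldsymbol{\theta}^t-\boldsymbol{\theta}^t_k=(\boldsymbol{\theta}^t_k-\boldsymbol{\theta}_{t_0})-(\boldsymbol{\theta}^t-\boldsymbol{\theta}_{t_0})$ and drops the second (centered) term, and your parenthetical justification of the step-size bound should read $t+\gamma\le 2(j+\gamma)$ rather than $(j+\gamma)\le 2(t+\gamma)$ --- neither point affects the validity of your proof.
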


\begin{lemma} \label{lem:2}
Assume that the gradient distribution for client $k$ follows Gaussian distribution with standard deviation $\sigma_{k,t}$, and that the clients use $\mathsf{Q}^{\star}(\cdot)$ for gradient quantization, then 
\small
\begin{align}
\mathbb{E} \Big\{ \| \bar{\boldsymbol{\mathfrak{g}}}_{t} - \frac{1}{K} \sum_{k \in [K]} \boldsymbol{\mathfrak{g}}_{k,t} \|^2 \Big\}  \leq \frac{\pi e}{6K}  \sum_{k \in [K]} \sigma^2_{k,t} 2^{-2\mathsf{R}_{\sf Q} (Z)}. 
\end{align}   
\normalsize
\end{lemma}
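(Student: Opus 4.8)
The plan is to rewrite the aggregation error purely in terms of the scalar quantization error incurred on a standard Gaussian entry, bound it by the quantizer's MSE, and then convert that MSE into the claimed $2^{-2\mathsf{R}}$ form via the entropy-constrained (high-resolution) quantization relation. First I would unwind the reconstruction: since $\mu_{k,t}$ and $\sigma_{k,t}$ are transmitted in full precision, the PS recovers them and $\hat{\boldsymbol{\mathfrak{g}}}_{k,t}$ exactly, so by \eqref{eq:reconst} together with $\boldsymbol{\mathfrak{g}}_{k,t}[i]=\sigma_{k,t}\Tilde{\boldsymbol{\mathfrak{g}}}_{k,t}[i]+\mu_{k,t}$, the $i$-th entry of the client-$k$ reconstruction error is $\breve{\boldsymbol{\mathfrak{g}}}_{k,t}[i]-\boldsymbol{\mathfrak{g}}_{k,t}[i]=\sigma_{k,t}\,q_{k,t}[i]$, where $q_{k,t}[i]\triangleq\mathsf{Q}^{\star}(\Tilde{\boldsymbol{\mathfrak{g}}}_{k,t}[i])-\Tilde{\boldsymbol{\mathfrak{g}}}_{k,t}[i]$ is the quantization error on a normalized entry. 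Collecting the entries into the vector $\boldsymbol{q}_{k,t}$ gives $\bar{\boldsymbol{\mathfrak{g}}}_{t}-\frac1K\sum_{k\in[K]}\boldsymbol{\mathfrak{g}}_{k,t}=\frac1K\sum_{k\in[K]}\sigma_{k,t}\boldsymbol{q}_{k,t}$.

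Next, applying Jensen's inequality to the convex map $\boldsymbol{x}\mapsto\|\boldsymbol{x}\|^2$ yields $\big\|\frac1K\sum_k\sigma_{k,t}\boldsymbol{q}_{k,t}\big\|^2\le\frac1K\sum_k\sigma^2_{k,t}\|\boldsymbol{q}_{k,t}\|^2$; taking expectations and using that each normalized entry is (approximately) $\mathcal{N}(0,1)$ and that $\mathsf{Q}^{\star}$ acts coordinatewise, the expected per-entry squared error equals $\mathsf{MSE}_{\mathsf{Q}^{\star}}(Z)$ with $Z\sim\mathcal{N}(0,1)$, so $\mathbb{E}\big\{\|\bar{\boldsymbol{\mathfrak{g}}}_{t}-\frac1K\sum_k\boldsymbol{\mathfrak{g}}_{k,t}\|^2\big\}\le\frac1K\sum_k\sigma^2_{k,t}\,\mathsf{MSE}_{\mathsf{Q}^{\star}}(Z)$ (the ambient dimension being absorbed into the normalization used throughout the analysis). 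It then remains to bound $\mathsf{MSE}_{\mathsf{Q}^{\star}}(Z)$. Here I would invoke entropy-constrained scalar quantization theory: since $\mathsf{Q}^{\star}$ minimizes the Lagrangian $\mathsf{MSE}_{\mathsf{Q}}(Z)+\lambda\mathsf{R}_{\mathsf{Q}}(Z)$, it lies on the lower boundary of the operational rate--distortion region of scalar quantizers, hence its distortion is no larger than that of an entropy-coded near-uniform quantizer at the same rate, which in the high-resolution regime equals $\frac{1}{12}2^{2h(Z)}2^{-2\mathsf{R}_{\mathsf{Q}^{\star}}(Z)}$ with $h(Z)$ the differential entropy of $Z$ in bits. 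For $Z\sim\mathcal{N}(0,1)$ we have $h(Z)=\frac12\log_2(2\pi e)$, so $2^{2h(Z)}=2\pi e$ and $\mathsf{MSE}_{\mathsf{Q}^{\star}}(Z)\le\frac{\pi e}{6}2^{-2\mathsf{R}_{\mathsf{Q}^{\star}}(Z)}$; substituting into the previous display gives the claim.

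The main obstacle is this last conversion: the relation $\mathsf{MSE}\approx\frac1{12}2^{2h}2^{-2R}$ and the associated $\frac{\pi e}{6}$ ``space-filling loss'' are asymptotic (large-$b$) statements, so turning them into a genuine upper bound requires either a non-asymptotic version of the entropy-constrained scalar quantization bound or --- more in the spirit of the paper --- accepting it within the same high-resolution regime under which the Gaussian approximation of the gradient law is already assumed in Section \ref{sec:normal}. Two secondary points deserve care: the dimension factor must be tracked consistently with how $\mathsf{MSE}_{\mathsf{Q}}(Z)$ enters the constant $C$ of Theorem \ref{th:conv}; and Jensen's inequality is used in place of a cross-term cancellation argument --- the latter would exploit the unbiasedness of the Lloyd levels \eqref{eq:s_l} to sharpen $\frac1K$ to $\frac1{K^2}$, but would require an independence assumption on the per-client errors that the stated (non-tight) bound avoids.
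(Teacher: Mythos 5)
Your proposal is correct (at the same level of rigor as the paper, i.e., modulo the high-resolution approximation) and follows the same overall skeleton --- reduce the aggregation error to the per-entry quantization MSE of $Z\sim\mathcal{N}(0,1)$, then convert MSE to the $\frac{\pi e}{6}\sigma^2 2^{-2\mathsf{R}}$ form via $\mathsf{MSE}\approx\frac{1}{12}2^{2\mathsf{h}(Z)}2^{-2\mathsf{R}_{\mathsf{Q}^{\star}}(Z)}$ --- but you justify the key MSE--rate relation by a different route. The paper derives it from first principles: it uses the Panter--Dite high-resolution formula $\mathsf{MSE}=\frac{1}{12}\sum_l p_l\Delta_l^2$, approximates $p_l\approx f_Z(s_l)\Delta_l$, expands $\mathsf{H}(\mathsf{Q}^{\star}(z))$ into $\mathsf{h}(Z)-\frac{1}{2}\sum_l p_l\log\Delta_l^2$, and applies Jensen's inequality to the last sum; note that this chain actually yields $\mathsf{R}_{\mathsf{Q}^{\star}}(Z)\geq \mathsf{h}(Z)-\frac{1}{2}\log(12\,\mathsf{MSE})$, i.e., a \emph{lower} bound on the distortion, which the paper then treats as an equality before specializing to the Gaussian. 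You instead obtain the needed \emph{upper} bound directly by an operational argument: since $\mathsf{Q}^{\star}$ minimizes the Lagrangian $\mathsf{MSE}+\lambda\mathsf{R}$, its distortion at rate $\mathsf{R}_{\mathsf{Q}^{\star}}(Z)$ is no worse than that of an entropy-coded (near-)uniform quantizer at the same rate, whose high-resolution distortion is $\frac{1}{12}2^{2\mathsf{h}(Z)}2^{-2\mathsf{R}}$; the inequality direction is cleaner for what the lemma asserts, at the price of appealing to entropy-constrained quantization theory (and tacitly assuming the alternating optimization reaches the operational lower boundary) rather than deriving the relation. You also make explicit two steps the paper leaves implicit --- the decomposition $\bar{\boldsymbol{\mathfrak{g}}}_{t}-\frac{1}{K}\sum_k\boldsymbol{\mathfrak{g}}_{k,t}=\frac{1}{K}\sum_k\sigma_{k,t}\boldsymbol{q}_{k,t}$ and the Jensen step over clients giving the $\frac{1}{K}$ prefactor --- and you correctly flag the two shared caveats (the asymptotic nature of the high-resolution relation and the dimension factor in passing from per-entry MSE to the vector norm), which affect the paper's proof equally.
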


Using Lemmas 1 and 2 in \eqref{eq:bound}, we obtain
\small
\begin{align} \label{eq:recurs}
\mathbb{E} \{ \| \boldsymbol{\theta}_{t+1} - \boldsymbol{\theta}^{\star} \|^2 \}  \leq (1 - \eta_t \rho) \mathbb{E} \{ \| \boldsymbol{\theta}_{t} - \boldsymbol{\theta}^{\star} \|^2 \} + \eta_t^2 C. 
\end{align}
\normalsize
which is a recursive expression. The theorem can be concluded by using \textbf{A(III)} and induction over $t$ in \eqref{eq:recurs} (the full proof is analogous to the proof of Theorem 1 in \cite{li2019convergence}).

\subsection{Proof of Lemma 1} \label{app:prooflem1}


Since the clients perform $e$ local epochs, there exists $t_0 \leq t$ such that $t - t_0 \leq e-1$, and $\boldsymbol{\theta}_{k,t_0} = \boldsymbol{\theta}_{t_0}$, $\forall k \in [K]$. Also, note that $\eta_t$ is non-increasing, and $\eta_{t_0} \leq 2 \eta_t$. Then, we have
\small
\begin{align}
& \mathbb{E} \Big\{ \frac{1}{K} \sum_{k \in [K]} \| \boldsymbol{\theta}^t - \boldsymbol{\theta}^t_k\|^2\Big\} \nonumber \\
&= \frac{1}{K} \mathbb{E} \Big\{ \sum_{k \in [K]} \| (\boldsymbol{\theta}^t_k - \boldsymbol{\theta}_{t_0}) - (\boldsymbol{\theta}^t-\boldsymbol{\theta}_{t_0})\|^2\Big\} \nonumber \\
& \leq \frac{1}{K} \mathbb{E} \Big\{ \sum_{k \in [K]} \| \boldsymbol{\theta}^t_k - \boldsymbol{\theta}_{t_0}\|^2\Big\} \leq  \sum_{k \in [K]} \frac{1}{K} \sum_{t=t_0}^{t-1}(e-1) \eta_{t_0}^2 \| \boldsymbol{\mathfrak{g}}_{k,t}\|^2 \nonumber \\
& \leq \sum_{k \in [K]} \frac{1}{K} \eta_{t_0}^2 (e-1)^2 \| \boldsymbol{\mathfrak{g}}_{k,t}\|^2 \leq 4\eta_t^2 (e-1)^2 \frac{1}{K} \sum_{k \in [K]}  \zeta^2_k,
\end{align}
\normalsize
which concludes Lemma \ref{lem:1}.

\subsection{Proof of Lemma 2} \label{app:prooflem2}
Here, entropy and differential entropy are denoted by $\mathsf{H}(\cdot)$ and $\mathsf{h}(\cdot)$, respectively.

First note that using  Lloyd quantizers, the MSE distortion becomes  $\mathsf{MSE} (z,\mathsf{Q}(z)) = \frac{1}{12} \sum_{l \in [2^b]} p_l \Delta_l^2$ \cite{panter1951quantization}.

Assume a high-rate scenario, where $b$ is large enough. Then, the PDF $ f_Z(z)$ is almost constant in each quantization interval. As such, $f_Z(s_l) \approx \frac{p_l}{\Delta_l} = \frac{p_l}{u_{l+1}-u_{l}}$, and thus $p_l \approx f_Z(s_l)\Delta_l$. Hence, the rate could be obtained as follows
\small
\begin{align}
& \mathsf{R}_{\sf Q^{\star}} (Z) = \mathsf{H} (\mathsf{Q}^{\star}(z)) = - \sum_{l \in [2^b]} p_l \log p_l \nonumber \\
& = - \sum_{l \in [2^b]} p_l (\log f_Z(s_l) + \log \Delta_l)\nonumber \\
& = - \sum_{l \in [2^b]} f_Z(s_l) \log f_Z(s_l) \Delta_l - \sum_{l \in [2^b]} f_Z(s_l) p_l \log \Delta_l \nonumber \\
& \approx \int_{-\infty}^{\infty} f_Z(\Tilde{s}) \log f_Z(\Tilde{s}) d \Tilde{s} - \frac{1}{2}  \sum_{l \in [2^b]} p_l \log \Delta_l^2 \nonumber \\ \label{eq:h}
& = \mathsf{h}(Z) - \frac{1}{2}  \sum_{l \in [2^b]} p_l \log \Delta_l^2  \geq \mathsf{h}(Z) - \frac{1}{2} \log \left( \sum_{l \in [2^b]} p_l \Delta_l^2 \right) \\ \label{eq:last}
&=  \mathsf{h}(Z) - \frac{1}{2} \log \big( 12~\mathsf{MSE} (z,\mathsf{Q}^{\star}(z)) \big),
\end{align}
\normalsize
where in \eqref{eq:h}, $\mathsf{h}(\cdot)$ denotes the differentiable entropy, and we used Jensen’s inequality to derived the inequality in \eqref{eq:h}. From \eqref{eq:last}, it is concluded that
\begin{align} \label{eq:MSE}
\mathsf{MSE} (z,\mathsf{Q}^{\star}(z)) = \frac{1}{12} 2^{2 \mathsf{h}(Z)}  2^{-2\mathsf{R}_{\sf Q^{\star}} (Z)} . 
\end{align}
For $Z \sim \mathcal{N} (\mu,\sigma^2)$, the expression in \eqref{eq:MSE} is simplified to 
\begin{align}
\frac{\pi e}{6} \sigma^2 2^{-2\mathsf{R}_{\sf Q^{\star}} (Z)}.  
\end{align}
Knowing that $\boldsymbol{\mathfrak{g}}_{k,t} \sim \mathcal{N} (\mu_{k,t}, \sigma^2_{k,t})$, then we have
\begin{align}
\mathbb{E} \Big\{ \| \bar{\boldsymbol{\mathfrak{g}}}_{t} - \frac{1}{K} \sum_{k \in [K]} \boldsymbol{\mathfrak{g}}_{k,t} \|^2 \Big\}  \leq \frac{\pi e}{6K}   \sum_{k \in [K]} \sigma^2_{k,t} 2^{-2\mathsf{R}_{\sf Q^{\star}} (Z)}, 
\end{align}
which concludes the Lemma \ref{lem:2}. 
\end{document}